\providecommand{\algorithmname}{Algorithm}
\newcommand\footnoteref[1]{\protected@xdef\@thefnmark{\ref{#1}}\@footnotemark}
\renewcommand{\algorithmiccomment}[1]{\bgroup\hfill\scriptsize//~#1\egroup}
\def\tr{^{\rm T}}
\def\zero{\hbox{\bf 0}}
\def\bff{{\mbox{\boldmath $f$}}}
\def\bfu{{\mbox{\boldmath $u$}}}
\def\bfx{{\mbox{\boldmath $x$}}}
\def\bfI{{\mbox{\boldmath $I$}}}
\def\bfK{{\mbox{\boldmath $K$}}}
\def\bfP{{\mbox{\boldmath $P$}}}
\def\bfX{{\mbox{\boldmath $X$}}}
\def\bfLambda{{\mbox{\boldmath $\Lambda$}}}
\def\bfPhi{{\mbox{\boldmath $\Phi$}}}
\def\bfmu{{\mbox{\boldmath $\mu$}}}
\newtheorem{theorem}{\bf Theorem}
\begin{document}

\title{\LARGE \bf
Incremental Skill Learning of Stable Dynamical Systems}

\author{Matteo Saveriano$^{1}$ and Dongheui Lee$^{1,2}$%
\thanks{$^{1}$Institute of Robotics and Mechatronics, German Aerospace Center (DLR), We{\ss}ling, Germany {\tt matteo.saveriano@dlr.de}.}%
\thanks{$^{2}$Human-Centered Assistive Robotics, Technical University of Munich, Munich, Germany {\tt dhlee@tum.de}.}%
\thanks{This work has been supported by Helmholtz Association.}
}

% The paper headers
%\markboth{IEEE Transactions on Robotics,~Vol.~x, No.~x, xxx~2015}%
%{Saveriano \MakeLowercase{\textit{et al.}}: Incremental Reshaping of Stable Dynamical Systems using Gaussian Process Regression}

\maketitle
%\thispagestyle{empty}
%\pagestyle{empty}

%%%%%%%%%%%%%%%%%%%%%%%%%%%%%%%%%%%%%%%%%%%%%%%%%%%%%%%%%%%%%%%%%%%%%%%%%%%%%%%%

\begin{abstract}
Efficient skill acquisition, representation, and on-line adaptation to different scenarios has become of fundamental importance for assistive robotic applications. In the past decade, dynamical systems (DS) have arisen as a flexible and robust tool to represent learned skills and to generate motion trajectories. This work presents a novel approach to incrementally modify the dynamics of a generic autonomous DS when new demonstrations of a task are provided. A control input is learned from demonstrations to modify the trajectory of the system while preserving the stability properties of the reshaped DS. Learning is performed incrementally through Gaussian process regression, increasing the robot's knowledge of the skill every time a new demonstration is provided. The effectiveness of the proposed approach is demonstrated with experiments on a publicly available dataset of complex motions.
\end{abstract}

% Note that keywords are not normally used for peerreview papers.
%\begin{IEEEkeywords}
%Incremental learning ...
%\end{IEEEkeywords}

\IEEEpeerreviewmaketitle

%===================== INTRODUCTION ===========================================%
\section{Introduction}\label{sec:intro}
Future robots will have a tight interaction with humans and they will need an increased versatility to rapidly adapt their behaviour to dynamic and potentially unseen situations. Having a fixed set of predefined skills is not sufficient to execute everyday tasks in human populated environments. Programming by Demonstrations (PbD) is a well-established approach to rapidly teach new skills avoiding tedious programming \cite{Billard_PbD, Calinon18}. In the PbD framework, the robot can learn by observing the human behaviour (imitation learning) \cite{Calinon07, Lee10}, or an expert user can directly guide the robot towards the task execution (kinesthetic teaching) \cite{Lee11, Saveriano15}. %The learned skill is usually represented in a compact way which allows the generalization in different scenarios and reduces memory requirements \cite{Calinon07}.%. Typical examples are Gaussian mixture models/regression (GMM/GMR) \cite{GMR} and hidden Markov models (HMM) \cite{HMM}.

Point-to-point motions, also called discrete movements, are spatial motions ending at a specified target. Discrete movements are of importance in several robotic applications, e.g. in assembly tasks, and they can be combined to build complex tasks \cite{Caccavale18}. Recent work in PbD \cite{DMP, TP-DMP, SEDS, NeuralLearn2,  Clf, tau-SEDS, Perrin16} focuses on representing discrete movements as stable dynamical systems (DS), learned from human demonstrations. DS have been proven to be flexible enough to accurately represent complicated motions \cite{NeuralLearn2,  Clf, tau-SEDS}. Moreover, robots driven by stable DS are guaranteed to reach the desired position, and can react in real-time to external perturbations, like changes in the target position or unexpected obstacles \cite{DS_avoidance, Saveriano13, Saveriano14, Saveriano17, Hoffmann09}. DS have been also used to learn impedance behaviors from demonstrations \cite{Calinon10, SaverianoURAI14} and to refine learned behaviors through reinforcement learning \cite{Kormushev10, Buchli11, Winter15}.%Dynamical systems have been effectively adopted also to learn impedance behaviours from demonstrations \cite{Calinon10, Kronander12, SaverianoURAI14} and to refine learned kinematics or impedance behaviours through reinforcement learning \cite{Kober09, Kormushev10, Buchli11, Kober13, Calinon13, Winter15}.
 %Although DS are widely used in robotics applications, there is not much work on incremental learning approaches for DS. In \cite{Kronander15}, authors propose to reshape the velocity of an autonomous (time independent) first-order DS using a modulation matrix. A similar approach has been effectively used in \cite{DS_avoidance, Saveriano13, Saveriano14} for reactive collision avoidance. The modulation matrix used in \cite{Kronander15} is parametrized as a rotation matrix $\bfR$ multiplied by a scalar gain $\alpha$. The parameters $[\alpha,~\bfR]$ are incrementally learned from demonstrations using Gaussian processes regression \cite{Rasmussen06}. The approach in \cite{Kronander15} has the advantage to locally modify the DS dynamics, i.e. the dynamics in regions of the state space far from the demonstrated trajectories remain unchanged, but it does not preserve global convergence properties of the reshaped system. Moreover, it only works for first-order DS, while second-order DS have been effectively used in robotics to generate dynamically feasible trajectories \cite{DMP, Haddadin10, Calinon12}.
\begin{figure}[t!]
	\centering
	\includegraphics[width=0.85\columnwidth]{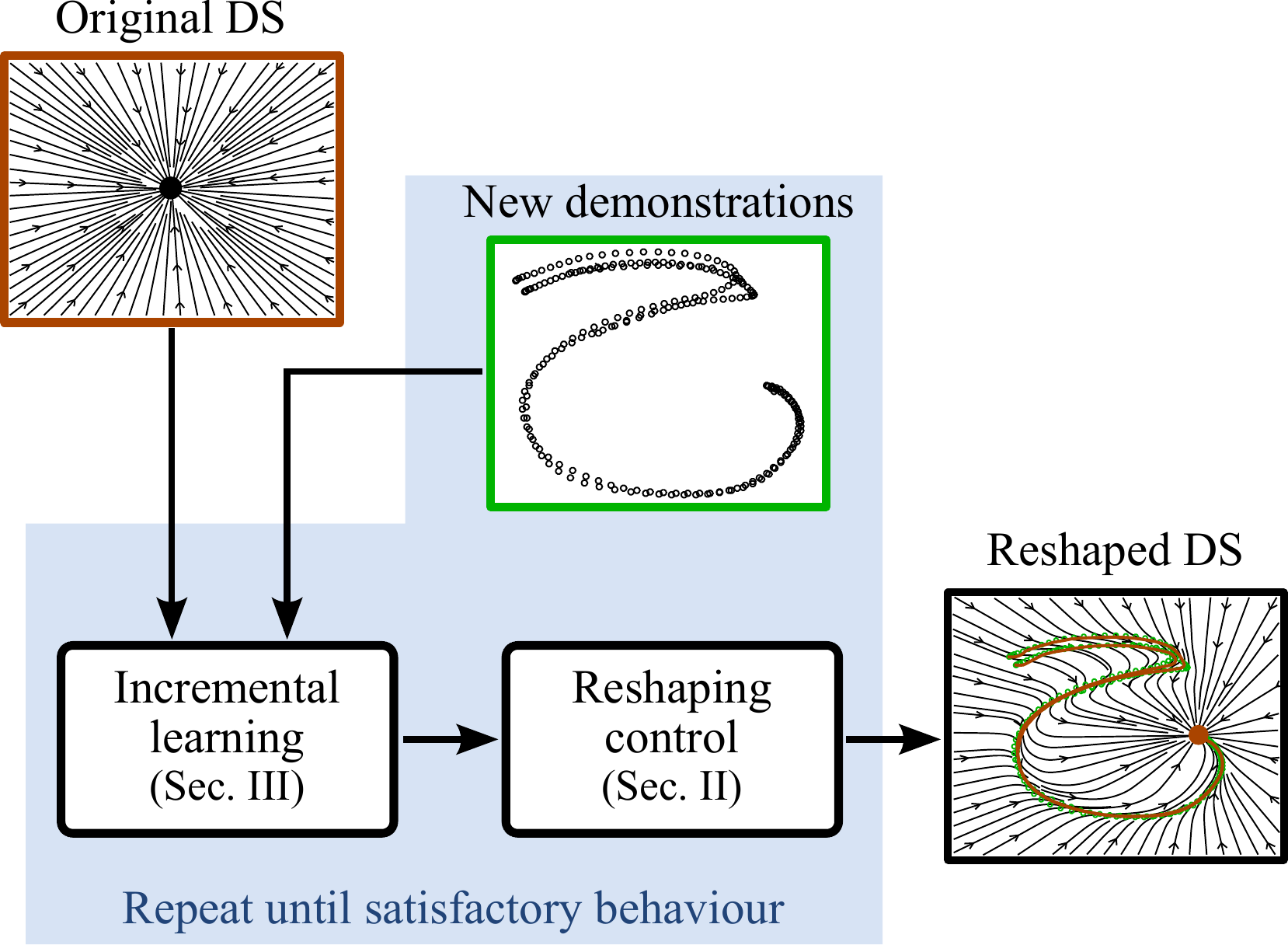}
	\caption{Overview of the Reshaped Dynamical Systems (RDS) approach.}
	\label{fig:overview}
\end{figure} 

This work focuses on the incremental learning of point-to-point motions represented as stable dynamical system, i.e. on how to modify the robot's behavior as novel demonstrations of the task are provided. In our framework, we assume that a predefined skill is given in the form of a stable DS, the so-called original system. The trajectories of the original DS are modified by a reshaping control input, retrieved on-line by means of Gaussian process regression \cite{Rasmussen06}. The reshaping action is learned incrementally from user demonstrations by adding new points to the training set. To alleviate the problem of the increasing computational time, a trajectory-based sparsity criteria is introduced to reduce the amount of novel points added to the training data. The reshaping controller guarantees an accurate reproduction of the demonstrated task without affecting the convergence properties of the original DS. Our formulation does not require any prior knowledge on the original DS and it applies to a wide class of non-linear, autonomous systems.  %In particular, there are no restrictions regarding the order of the DS to reshape and the dimension of the system state space.  
An overview of the proposed Reshaped Dynamical Systems (RDS) is shown in Fig. \ref{fig:overview}. %, where the user observes the robot's behaviour and, if needed, (s)he incrementally provides novel demonstrations until the robot's behaviour is satisfactory. 

%To summarize, our contribution is two-fold:
%\begin{itemize}
%	\item An approach to reshape dynamical systems preserving their convergence properties.
%	\item An approach, based on Gaussian processes, to incrementally learn the reshaping control input.
%\end{itemize}
The rest of the paper is organized as follows. Section \ref{sec:DS_Modulation} presents an approach to modify the trajectory of a DS without affecting its stability. The proposed incremental learning algorithm is described in Section \ref{sec:learning}. Section \ref{sec:rel_work} describes the related works. RDS is evaluated on a public dataset and compared with the state-of-the-art approaches \cite{Clf, Kronander15} in Section \ref{sec:experiments}. Section \ref{sec:conclusion} concludes the paper.

%===================== SECTION II ===========================================%
\section{Dynamical System Reshaping}\label{sec:DS_Modulation}
%In this section, we describe an approach to reshape the dynamics of a generic dynamical system without modifying its stability properties. %For a clear presentation, we first present results for first-order dynamical systems. The extension to high-order DS is discussed in Sec. \ref{sec:ext_high_ds}.

\subsection{Problem definition}\label{subsec:background}
Assume that a robotic skill is encoded in a first-order DS
\begin{equation}
	\dot{\bfx} = \bff(\bfx),
	\label{eq:ds_syst}
\end{equation}
where\footnote{We omit the time dependency of $\bfx$ to simplify the notation.} $\bfx, \dot{\bfx} \in \mathbb{R}^n$ are respectively the position and the velocity of the robot (in joint or Cartesian space), and $\bff : \mathbb{R}^{n} \rightarrow \mathbb{R}^{n}$ is a continuous and continuously differentiable non-linear function. A solution of (\ref{eq:ds_syst}), namely $\bfPhi(\bfx_0,t) \in \mathbb{R}^{n}$, is called trajectory. Different initial conditions $\bfx_0$ generate different trajectories. A point $\hat{\bfx} : \bff(\hat{\bfx}) = \mathbf{0} \in \mathbb{R}^{n}$ is an equilibrium point. An equilibrium $\hat{\bfx} \in S \subset \mathbb{R}^{n}$ is locally asymptotically stable (LAS) if $\lim_{t\rightarrow+\infty} \bfPhi(\bfx_0,t) =\hat{\bfx}, \forall \bfx_0 \in S$. If $S = \mathbb{R}^{n}$, $\hat{\bfx}$ is globally asymptotically stable (GAS) and it is the only equilibrium of the DS.
% A sufficient condition for $\hat{\bfx}$ to be GAS is that there exists a scalar, continuously differentiable function of the state variables $V(\bfx) \in \mathbb{R}$ satisfying (\ref{eq:lyap_stab_cond1})-(\ref{eq:lyap_stab_cond3}) according to \cite{Slotine91}
%\begin{subequations}
%	\label{eq:lyap}
%	\begin{align}
%		&V(\bfx) \geq 0, ~\forall \bfx \in \mathbb{R}^{m n} ~\text{and}~ V(\bfx) = 0 \Longleftrightarrow \bfx = \hat{\bfx} \label{eq:lyap_stab_cond1}\\
%		&\dot{V}(\bfx) \leq 0, ~\forall \bfx \in \mathbb{R}^{m n} ~\text{and}~  \dot{V}(\bfx) = 0 \Longleftrightarrow \bfx = \hat{\bfx} \label{eq:lyap_stab_cond2}\\
%		&V(\bfx) \rightarrow \infty ~\text{as}~ \Vert \bfx \Vert  \rightarrow \infty ~\text{(radially unbounded)} \label{eq:lyap_stab_cond3}
%	\end{align}	
%\end{subequations} 
%Note that, if condition (\ref{eq:lyap_stab_cond3}) is not satisfied, the equilibrium point is LAS. A function satisfying (\ref{eq:lyap_stab_cond1})-(\ref{eq:lyap_stab_cond2}) is called a Lyapunov function\footnote{The interested reader is referred to \cite{Slotine91} for an exhaustive treatment of non-linear systems dynamics.}.

Given a novel demonstration of the task, our goal is to learn a control input  %$\bfu(\bfx) \in \mathbb{R}^{mn}$ and a reshaped DS
%\begin{equation}\label{eq:resh_discr_ds}
%	\dot{\bfx} = \bff(\bfx) + \bfu(\bfx)
%\end{equation}
that satisfies the following requirements:
\begin{enumerate}
	\item The reshaped DS has the same GAS equilibrium of $\dot{\bfx} = \bff(\bfx)$.
	\item The trajectories of the reshaped DS follow the given demonstrations $\mathcal{D}$.
	\item The control input is incrementally updated as novel demonstrations are given.
\end{enumerate}
A control structure that satisfies requirement 1) is presented in Sec. \ref{sec:glob_control}. An approach to learn a control input that satisfies 2) and 3) is presented in Sec. \ref{sec:learning}.

\subsection{Globally stable reshaping controller}\label{sec:glob_control}
In order to modify the trajectories of \eqref{eq:ds_syst}, one can exploit a suitable control input $\bfu(\bfx) \in \mathbb{R}^{n}$. Instead of considering the general form of controlled DS $\dot{\bfx} = \bff(\bfx,\bfu)$, we assume an additive and smooth (continuous and continuously differentiable) control input, obtaining the controlled DS $\dot{\bfx} = \bff(\bfx) + \bfu(\bfx)$. In general, the controlled system is not guaranteed to converge to the same equilibrium point $\hat{\bfx}$ of \eqref{eq:ds_syst}. Hence, we exploit a clock signal to suppress the control input $\bfu(\bfx)$ after $t_f$ seconds, ensuring global convergence to $\hat{\bfx}$. The resulting reshaped DS can be written as
%\begin{equation}
%\left\lbrace
%	\begin{split}
%	 \dot{\bfx} &= \bff(\bfx) + s\bfu(\bfx) \\
%	 \dot{s} &= \alpha(\hat{s}-s) 
%	\end{split}
%\right . ,
%	\label{eq:resh_ds_2}
%\end{equation}
\begin{subequations}
\begin{align}
 \dot{\bfx} &= \bff(\bfx) + s\bfu(\bfx) \label{eq:resh_ds_2}\\
 \dot{s} &= \alpha(\hat{s}-s) \label{eq:clock}
\end{align}
\end{subequations}
where $\hat{s} = 1$ for $t \leq t_f$, $\hat{s} = 0$ for $t > t_f$, and the control input $\bfu(\bfx)  \in \mathbb{R}^{n}$ is a continuous and continuously differentiable function. The time $t_f>0$, after which $\hat{s}=0$, is a tunable parameter. The scalar gain $\alpha > 0$ determines how fast $s$ reaches the desired value $\hat{s}$ and it can be tuned considering that, in practice, $s = \hat{s}$ after $5/\alpha$ seconds. In all the experiments we choose $\alpha = 10$ to have $s = \hat{s}$ after $0.5\,$s. %Being $\hat{s} = 1$ for $t < t_f$, the reshaping control input is active for $t_f$ seconds, where . %Further details about the selection of the control time are provided in Sec. \ref{subsec:max_time}.

The reshaped dynamics \eqref{eq:resh_ds_2}--\eqref{eq:clock} is GAS if the original DS (\ref{eq:ds_syst}) is GAS, as stated by the following theorem.
%\begin{theorem}
%\label{th:discr_reshape}
%If the dynamical system $\dot{\bfx} = \bff(\bfx)$ in (\ref{eq:ds_syst}) has a GAS equilibrium $\hat{\bfx} \in \mathbb{R}^{n}$, then the reshaped dynamics (\ref{eq:resh_ds_2}) globally asymptotically converges to $\hat{\bfx}_r = [\hat{\bfx},0]\tr \in \mathbb{R}^{n+1}$.
%\end{theorem}
\begin{theorem}
\label{th:discr_reshape}
Assume that the dynamical system $\dot{\bfx} = \bff(\bfx)$ in \eqref{eq:ds_syst} has a GAS equilibrium $\hat{\bfx} \in \mathbb{R}^{n}$ and that $\hat{s}$ in \eqref{eq:clock} is $\hat{s} = 0$ for $t > t_f$. Under these assumptions the reshaped dynamics \eqref{eq:resh_ds_2} globally asymptotically converges to $\hat{\bfx}$.
\end{theorem}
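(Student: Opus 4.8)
The plan is to split the time axis at $t_f$ and handle the two regimes separately: on $[0,t_f]$ the clock keeps the control active, and there one only needs to argue that the state stays well defined and finite; on $(t_f,\infty)$ the clock forces $\hat s = 0$, and there the term $s\bfu(\bfx)$ becomes a perturbation of the original GAS system $\dot{\bfx}=\bff(\bfx)$ that vanishes exponentially in time, from which convergence to $\hat{\bfx}$ follows.

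\emph{Regime $t \le t_f$.} Solving \eqref{eq:clock} with $\hat s = 1$ gives $s(t) = 1 + (s(0)-1)e^{-\alpha t}$, so $s$ is continuous and bounded on the compact interval $[0,t_f]$. Since $\bff$ and $\bfu$ are continuous (hence locally bounded), the right-hand side of \eqref{eq:resh_ds_2} is continuous in $\bfx$, and a solution exists and stays finite on $[0,t_f]$ (a mild condition such as boundedness of $\bfu$, which holds for the GPR reshaping term with a bounded covariance function, rules out finite escape). Denote the resulting finite endpoint by $\bfx_f := \bfPhi(\bfx_0,t_f)$ and $s_f := s(t_f)$.

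\emph{Regime $t > t_f$.} Here $\hat s = 0$, so \eqref{eq:clock} reduces to $\dot s = -\alpha s$ and $s(t) = s_f\,e^{-\alpha(t-t_f)} \to 0$ exponentially. The position dynamics is $\dot{\bfx} = \bff(\bfx) + \bfg(t,\bfx)$ with $\bfg(t,\bfx) := s_f e^{-\alpha(t-t_f)}\bfu(\bfx)$, a perturbation tending to zero as $t\to\infty$. To conclude $\bfx(t)\to\hat{\bfx}$ I would invoke a converse Lyapunov theorem: since $\dot{\bfx}=\bff(\bfx)$ is GAS with $\bff\in C^1$, there exists a smooth, positive definite, radially unbounded $V$ with $\nabla V(\bfx)\cdot\bff(\bfx) < 0$ for all $\bfx\neq\hat{\bfx}$. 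Along the reshaped flow $\dot V = \nabla V(\bfx)\cdot\bff(\bfx) + s_f e^{-\alpha(t-t_f)}\,\nabla V(\bfx)\cdot\bfu(\bfx)$; outside any ball $B_\delta(\hat{\bfx})$ the first term is bounded above by a negative constant, so once $t$ is large enough that the decaying factor makes the second term smaller in magnitude, $\dot V < 0$ there, which drives the trajectory into $B_\delta(\hat{\bfx})$; letting $\delta\to 0$ yields global convergence. Equivalently, one may note that for $t>t_f$ the pair $(\bfx,s)$ obeys an \emph{autonomous} cascade ($s$ driving $\bfx$, interconnection term vanishing at $s=0$), so GAS of the cascade follows from GAS of each subsystem together with forward boundedness of the solutions.

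The delicate point is the cross term $s\,\nabla V(\bfx)\cdot\bfu(\bfx)$: GAS alone is not robust to arbitrary vanishing perturbations, so one must either (i) first establish forward boundedness of the reshaped trajectory — after which $\|\nabla V\|$ and $\|\bfu\|$ are bounded along its closure and the estimate above closes — or (ii) assume $\bfu$ is globally bounded (natural for a GPR term with a bounded kernel), making $\|\bfg(t,\cdot)\|$ decay uniformly and integrably in time, so that a comparison-lemma argument on $V$ gives global convergence directly. I would take route (ii), stating the boundedness of $\bfu$ as a benign standing assumption, since it also cleanly covers the $t\le t_f$ existence issue, and keep the cascade interpretation as the conceptual summary of why the clock restores the original equilibrium.
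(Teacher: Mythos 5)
Your proof follows the same conceptual route as the paper's, but the paper's own argument is only the two‑line version of it: the authors observe that the clock dynamics \eqref{eq:clock} is decoupled from \eqref{eq:resh_ds_2}, that $\alpha>0$ and $\hat{s}=0$ for $t>t_f$ force $s\to 0$, hence $s\bfu(\bfx)\to\mathbf{0}$, and they then assert directly that $\bfx$ converges to the GAS equilibrium $\hat{\bfx}$ of \eqref{eq:ds_syst}. In other words, the ``delicate point'' you single out --- that global asymptotic stability of the nominal system is \emph{not} by itself robust to a vanishing additive perturbation --- is precisely the step the paper treats as immediate. Your split at $t_f$, the explicit solution of the clock, the converse‑Lyapunov construction, and the cascade reading are all elaborations the paper does not contain; they are correct in spirit and make the proof honest rather than merely plausible.

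One caveat on your own patch: route (ii) as stated does not quite close either. Bounding $\bfu$ gives $\Vert\bfg(t,\bfx)\Vert \le s_f e^{-\alpha(t-t_f)}\sup\Vert\bfu\Vert$, but the perturbation enters $\dot V$ through $\nabla V(\bfx)\cdot\bfg$, and a converse Lyapunov function for a merely GAS (not ISS) system need not satisfy $\Vert\nabla V(\bfx)\Vert \le c\,(1+V(\bfx))$; without such a bound the comparison lemma does not by itself exclude escape to infinity, and the converging‑input/converging‑state property is known to fail for general GAS systems. So forward boundedness of the reshaped trajectory (your route (i)), or an ISS‑type hypothesis on $\bff$, is the genuinely essential ingredient; boundedness of $\bfu$ alone only handles existence on $[0,t_f]$. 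This residual gap is not a defect relative to the paper --- it is inherited from, and hidden inside, the paper's one‑sentence conclusion --- but if you want a complete proof you should state it as an explicit assumption rather than fold it into the kernel‑boundedness remark.
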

\begin{proof}
Note that the linear dynamics of $\dot{s}$ in \eqref{eq:clock} does not depend on the dynamics of $\dot{\bfx}$ in \eqref{eq:resh_ds_2}. Being $\alpha > 0$ and $\hat{s} = 0$ for $t > t_f$, we can conclude that $s$ converges to $\hat{s} = 0$ for $t\rightarrow+\infty$. Hence, for $t\rightarrow+\infty$, the term $s\bfu(\bfx)\rightarrow\mathbf{0}$ and $\bfx$ converges to $\hat{\bfx}$, the GAS equilibrium of (\ref{eq:ds_syst}).
\end{proof}

The formulation introduced in \eqref{eq:resh_ds_2}--\eqref{eq:clock} ensures that the robot's motion is generated by a stable (first-order) dynamical system. As discussed in Sec. \ref{sec:intro}, stable DS generate converging motions that accurately reproduce the demonstrations and are robust to external perturbations. An additive control input is assumed in \eqref{eq:resh_ds_2}. While this is a common assumption for many physical systems like robots, it also eases the computation of the training data as detailed in Sec. \ref{subsec:train_data}. The clock signal in \eqref{eq:clock} introduces a time dependency in the reshaped system. This makes easy to ensure stability properties (see Theorem \ref{th:discr_reshape}) without losing some benefits of autonomous DS in case of external perturbations. Indeed, if the robot is blocked and time passes, the control input remains unchanged because it only depends on the robot position. When the robot is released, it smoothly continues its motion towards the goal.

The proposed reshaped DS (\ref{eq:resh_ds_2}) resembles the dynamic movement primitives (DMPs) framework \cite{DMP}. DMPs reshape a second-order linear system (original DS) with a non-linear forcing term (control input). An exponentially decaying clock signal is used to cancel the effects of the forcing term guaranteeing global stability. Compared to the original DMPs, our approach differs in the following aspects. In our framework the original DS can be any non-linear system. As experimentally shown in Sec. \ref{sec:experiments}, the adoption of a non-linear DS significantly improves the accuracy in reproducing complex, intrinsically non-linear movements. {Moreover, the non-linear control input can be learned and incrementally updated from multiple demonstrations, while batch learning from a single demonstration is used in original DMPs. The adoption of multiple demonstrations improves the generalization capabilities of the learning algorithm.} 

\section{Learning the reshaping controller}\label{sec:learning}
In this section, an approach is presented to learn and on-line retrieve the control input $\bfu(\bfx) \in \mathbb{R}^{n}$ in (\ref{eq:resh_ds_2}) for each state $\bfx$. We firstly describe how to compute the training data from the given demonstrations and learn the reshaping controller. Then, an approach is presented to incrementally update the reshaping controller.    

\subsection{Computation of the training data}\label{subsec:train_data}
Consider that a new demonstration of a skill is given as $\mathcal{D} = \{ \bfx_{d}^{t}, ~\dot{\bfx}_{d}^{t}\}_{t=1}^{T_d}$, where $\bfx_{d}^{t} \in \mathbb{R}^{n}$ is the desired state vector (e.g. the robot position) at time $t$, $\dot{\bfx}_{d}^{t}\in \mathbb{R}^n$ is the time derivative of $\bfx_{d}^{t}$ (e.g. the robot velocity), and $T_d$ is the number of samples. To learn the control input $\bfu(\bfx)$ in (\ref{eq:resh_ds_2}) from $\mathcal{D}$, demonstrations are first converted into a set of input/output training data. 

Assuming $s=1$ and considering (\ref{eq:resh_ds_2}), the dynamics of $\dot{\bfx}$ in (\ref{eq:resh_ds_2}) can be re-written as
\begin{equation}
\label{eq:train_ds}
\dot{\bfx} - \bff(\bfx)	= \bfu(\bfx),
\end{equation}
which shows that the desired control input $\bfu(\bfx)$ is a non-linear mapping between $\bfx$ and $\dot{\bfx} - \bff(\bfx)$. Hence, we consider the demonstrated states $\bfX = \{\bfx_{d}^{t}\}_{t=1}^{T_d}$ as input and $\bfLambda = \left\lbrace \dot{\bfx}_{d}^{t} - \bff(\bfx_{d}^{t})\right\rbrace_{t=1}^{T_d}$ as observations (output) of $\bfu(\bfx)$. In other words, the learned controller adds a displacement to $\bff(\bfx)$ which makes the reshaped dynamics close to the demonstrated one (see requirement 2) in Sec. \ref{subsec:background}).

Once the training data are computed, any regression technique can be applied to learn the reshaping controller and retrieve a smooth control input for each value of $\bfx$. In this work, we adopt a local regression technique, namely the Gaussian process (GP) regression \cite{Rasmussen06}. Local regression ensures that $\bfu \rightarrow \mathbf{0}$ when the state is far from the demonstrated trajectories, making possible to locally follow the demonstrations leaving the rest of the trajectory almost unchanged. Note that GP does not require the alignment of input sequences to a common length, being the regression performed considering all the points in the training set.  

\subsection{Gaussian process regression}
%Gaussian processes (GP) \cite{Rasmussen06} are widely used to learn input-output mappings from observations.%\footnote{Due to the limited space, we shortly describe the main aspects of GP and refer to \cite{Rasmussen06} for an exhaustive treatment.}. %GP assume that the data are generated by a set of underlying functions, whose joint probability distribution is Gaussian . 
Gaussian processes (GP) \cite{Rasmussen06} assume that the training input $\bfX$ and output $\bfLambda_i = \{\lambda_i^{t}\}_{t=1}^T$,  where $\lambda_i$ is the $i$-th component of $\bfLambda$, are drawn from the scalar noisy process $\lambda_{i}^{t} = g(\bfx^{t}) + \epsilon \in \mathbb{R}, ~t=1\ldots T_d$. The noise term $\epsilon$ is Gaussian with zero mean and variance $\sigma_{n}^2$, while $g(\cdot)$ is a smooth and unknown function. The joint distribution between training points and the output $\lambda_i^{*}$ at a query point $\bfx^{*}$ is
\begin{equation}
\label{eq:GP_joint_prob}
\begin{bmatrix} \bfLambda_i	 \\ \lambda_i^{*} \end{bmatrix} \sim \mathcal{N}\left(\mathbf{0}, \begin{bmatrix} \bfK_{XX} + \sigma_n^{2}\bfI & \bfK_{x^{*}X}\\ \bfK_{Xx^{*}} & k(\bfx^{*},\bfx^{*}) \end{bmatrix} \right),
\end{equation}
where $\bfK_{x^{*}X} = \{k(\bfx^{*},\bfx_1^{t})\}_{t=1}^{T_d}$, $\bfK_{Xx^{*}} = \bfK_{x^{*}X}\tr$, $k(\cdot,\cdot)$ is a covariance function, and the element $ij$ of the matrix $\bfK_{XX}$ is given by $\{\bfK_{XX}\}_{ij} = k(\bfx_i,\bfx_j)$.

To make predictions, one can consider that the conditional distribution of $\lambda_i^{*}$ given $\bfLambda_i$ can be written as
\small
\begin{equation}
\label{eq:GP_pred}
	\begin{split}
	\lambda_i^{*} | \bfLambda_i  &\sim \mathcal{N}\left(\mu_{\lambda_i^{*} | \bfLambda_i}, \sigma_{\lambda_i^{*} | \bfLambda_i}^2 \right),\\
		\mu_{\lambda_i^{*} | \bfLambda_i} &= \bfK_{x^{*}X} \left( \bfK_{XX} + \sigma_n^{2}\bfI \right)^{-1}\bfLambda_{i},\\
		\sigma_{\lambda_i^{*} | \bfLambda_i}^2 &=k(\bfx^{*},\bfx^{*}) - \bfK_{x^{*}X} \left( \bfK_{XX} + \sigma_n^{2}\bfI \right)^{-1}\bfK_{Xx^{*}}.
	\end{split}
\end{equation}
\normalsize
The mean $\mu_{\lambda_i^{*} | \bfLambda_i}$ is used as an estimate of $\lambda_i^{*}$ given $\bfLambda_i$. Note that the described procedure holds for a scalar output. Hence, $n$ GPs are used to represent the control input $\bfu(\bfx) \in \mathbb{R}^n$. 

In our approach, $k(\cdot,\cdot)$ is the squared exponential function 
\begin{equation}
\label{eq:se_rbf}
k(\bfx_i,\bfx_j) = \sigma_k^2 \exp{\left(-\frac{\Vert \bfx_i - \bfx_j \Vert^2}{2 l}\right)} + \sigma_n^2 \delta(\bfx_i,\bfx_j),
\end{equation}
where $\delta(\bfx_i,\bfx_j) = 1$ if $\Vert \bfx_i - \bfx_j \Vert = 0$ and $\delta(\bfx_i,\bfx_j) = 0$ otherwise. The variance $\sigma_k^2$, the length scale $l$, and the noise sensitivity $\sigma_n^2 $ are positive hyper-parameters which can be hand-tuned or learned from demonstrations \cite{Rasmussen06}. The kernel function (\ref{eq:se_rbf}) guarantees that the control input goes to zero ($\bfu(\bfx) \rightarrow \mathbf{0}$) for points far from the demonstrated positions.

\subsection{Incremental gaussian process updating}\label{sec:incr_gp}
GP regression is computed considering all the training data. If, as in this work, GP hyper-parameters are fixed, incremental GP learning is performed by simply adding new points to the training set. To reduce the computational effort due to the matrix inversion in (\ref{eq:GP_pred}), incremental GP algorithms introduce criteria to sparsely represent incoming data \cite{Kronander15, Csato02}. Following this idea and assuming that $T_d$ data $\{\bfx_{d}^{t},\dot{\bfx}_{d}^{t}-\bff(\bfx_{d}^{t})\}_{t=1}^{T_d}$ are already in the training set, we propose to add a new data point $[\bfx_{d}^{T_d+1},\dot{\bfx}_{d}^{T_d+1}-\bff(\bfx_{d}^{T_d+1})]$ if the cost is defined as  
\begin{equation}
\label{eq:cost}
C^{T_d+1} = \Vert \dot{\bfx}_{d}^{T_d+1}-\bff(\bfx_{d}^{T_d+1}) - \hat{\bfu}(\bfx_{d}^{T_d+1}) \Vert > \bar{c},
\end{equation}
where $\hat{\bfu}$ is the control input predicted at $\bfx_{d}^{T_d+1}$ using only data $\{\bfx_{d}^{t},\dot{\bfx}_{d}^{t}-\bff(\bfx_{d}^{t})\}_{t=1}^{T_d}$ already in the training set. The tunable parameter $\bar{c}$ represents the error in approximating the demonstrated state derivative and it can be easily tuned. For example, if $\dot{\bfx}_{d}$ is the robot velocity, $\bar{c} = 0.2$ means that velocity errors smaller than $0.2\,$m/s are acceptable. %The proposed incremental reshaping approach is summarized in Tab. \ref{tab:incr_learn}.

%\begin{table}[h]
%    \centering
%    \caption{Proposed reshaping approach.}
%    \label{tab:incr_learn}
%    {\renewcommand\arraystretch{1.3} 
%\begin{tabular}{ |c|l| }
%\hline
%\multirow{4}{*}{Batch}  &  Create a set of predefined tasks encoded as stable DS. \\
% & Stable DS can be designed by the user or learned\\
%  &  from demonstrations as in \cite{SEDS, DMP}.  \\ \cline{2-2}
%  & Provide a Lyapunov function $V(\bfx)$ for each DS.\\
%\hline \hline
%\multirow{7}{*}{Incremental}  &  Observe the robot's behaviour in novel scenarios.\\ \cline{2-2}
%  & If needed, provide a corrective demonstration, for\\
%  & example by kinesthetic teaching the robot.\\ \cline{2-2}  
%  & Learn the parameters of the control input (\ref{eq:resh_discr_ds}), as\\
%  & described in Sec. \ref{sec:learning}. Tuning parameters can be setted \\
%  & empirically by simulating the reshaped DS.\\ \cline{2-2}
%  & Repeat until the refined behaviour is satisfactory.\\
%\hline
%\end{tabular}
%    }
%\end{table}

%===================== RELATED WORK ===========================================%
\section{Related Work}\label{sec:rel_work}
\subsection{Skills representation using dynamical systems}\label{sec:batch_ds_learn}
The dynamic movement primitive (DMP) framework \cite{DMP} is one of the first examples of robotic skills representation via DS. DMPs exploit a non-linear forcing term, learned from a single demonstration, to reshape a linear dynamics, and a clock signal to suppress the non-linear force after a certain time guaranteeing the convergence towards the target. Task-parameterized motion primitives \cite{TP-DMP, Pervez18} extend the standard DMP by introducing extra task-dependent parameters useful to adapt robot movements to novel scenarios. %TP-DMP consists of a weighted summation of second order DS, learned from demonstrations using Gaussian mixture models (GMM) \cite{GMR}. 

The stable estimator of dynamical systems (SEDS) in \cite{SEDS} generates stable motions from a non-linear DS, represented by GMM. Global stability is ensured by constraining the GMM parameters to satisfy a set of stability constraints derived from a quadratic Lyapunov function. The main advantage of SEDS is that the learned system is globally stable. The main limitation is that contradictions may occur between the demonstrations and the quadratic stability constraints, preventing an accurate learning of the desired motion.

The accuracy problem is explicitly considered in several works \cite{NeuralLearn2, Clf, tau-SEDS, Perrin16}, showing that complex motions can be accurately represented by non-linear DS. In \cite{NeuralLearn2, Clf} two different approaches are proposed to learn a Lyapunov function which minimizes the contradictions between the stability constraints and the training data, favoring an accurate reproduction of complex motions. \cite{tau-SEDS} learns a diffeomorphic transformation that projects the training data into a space where they are well represented by a quadratic Lyapunov function. {Perrin et al. \cite{Perrin16} propose a fast algorithm to learn diffeomorphic transformations from a single demonstration. % The approach considers only one demonstration and assumes a linear DS.}

RDS is complementary to the state-of-the-art approaches for skill representation via DS. RDS, in fact, incrementally reshapes the trajectories of a given DS (original DS) without affecting its stability properties. The original DS can be either designed by an expert or learned from demonstrations using one of the the aforementioned approaches. 

%The main outcome of the work in \cite{NeuralLearn2, Clf, tau-SEDS, Cont-DS} is that complex motions can be accurately represented by non-linear DS.

%A common idea between [11] and [12] is to learn from demonstrations a Lyapunov function which minimizes the contradictions between the stability constraints and the training data. In [11], the DS is represented by an extreme learning machine (ELM) [13]. ELM parameters are learned by solving a constrained op-
%timization problem, where stability constraints are derived from the learned Lyapunov function. In SEDSII the learned Lyapunov function is used as control Lyapunov function [14] to compute, at run time, a stabilizing control input. Compared to SEDS, both SEDSII and the approach in [11] exhibit a superior reproduction accuracy. The main advantage of SEDSII is its applicability to any learning approach used to represent demonstrations as a dynamical system.

\subsection{Incremental learning of robotic skills}
Several approaches have been proposed to extend the DMP framework to incremental learning scenarios \cite{Schaal98, Gams09, petric14, Nemec_15, Kulvicius13, Maeda17}. In \cite{Schaal98} the recursive least square and a forgetting factor are used to incrementally update the DMP weights. Gams et al. \cite{Gams09} present a two-layered system for incremental learning of periodic movements. The first layer of the system is a DS which extracts the fundamental frequency of the demonstrations. The second layer is a periodic DMP which learns the waveform of the demonstrated motion. The overall system works on-line, but it is limited to periodic motions, while discrete movements are the focus of our work. The work in \cite{petric14} considers incremental human coaching for DMP. In the teaching phase, the user is considered as an obstacle, avoided by adding an extra forcing term to the DMP \cite{Hoffmann09}. In this way, the human is able to modify on-line the robot's path without touching it. The novel path is used to incrementally updated DMP weights via recursive least square. Nemec et al. \cite{Nemec_15} leverage iterative learning control \cite{ILC} to realize a learning strategy which is faster and more robust than recursive least square. The approaches in \cite{petric14, Nemec_15} are evaluated on periodic movements, but they are also applicable to point-to-point motions. The interaction between two agents, modeled via DMPs, is incrementally learned in \cite{Kulvicius13} to guarantee that both agents equilibrate into a common target, i.e. the two agents are effectively helping each other. Maeda et al. \cite{Maeda17} propose active incremental learning with DMP and Gaussian Processes (GP). They learn a GP from demonstrations and use GP regression to retrieve a confidence execution bound. If the confidence bound is low, the robot explicitly asks for novel demonstrations and updates the GP weights. A DMP is then trained over the GP mean to generate a converging trajectory. 

Similarly to DMP, RDS exploits an additive control input and a clock signal to reshape an asymptotically stable DS. The role of the clock signal in RDS and DMP is the same: suppress the control input to guarantee asymptotic stability. In DMP, the control input (or forcing term) is a function of time, while in RDS it is a function of the robot's position. A position dependent control generates smooth motions in case the robot is kept fixed by an external perturbation (see Sec. \ref{sec:glob_control}). In the same situation, a time depended forcing term may generate big accelerations when the robot is released due to the time passed. DMP reshapes only linear spring-damper DS, while the proposed RDS applies to any autonomous DS. This is the main limitation of DMP-based incremental approaches. Indeed, linear DS generate straight trajectories and, as experimentally demonstrated in Sec. \ref{subsec:exp_1}, transforming a straight line into a non-linear path is not trivial and may generate a loss of accuracy, i.e. the learned motion does not accurately match the demonstrated one.

The work in \cite{Blocher17} leverages Contraction theory to automatically compute a stabilizing control input for a DS represented by GMM. Even if the control input can be computed on-line, \cite{Blocher17} only works for DS represented by GMM, while RDS applies to any parameterization. The Locally Modulated Dynamical Systems (LMDS) in \cite{Kronander15} reshapes an autonomous DS using a modulation matrix, obtained by multiplying a rotation matrix by a scalar gain. The modulation matrix is incrementally learned from demonstrations using GP \cite{Rasmussen06}. The learned modulation matrix does not generate any spurious attractor in the modulated DS. Moreover, the effects of the modulation disappear for points far from the demonstrations. These properties of the modulation matrix guarantee the local stability of the modulated DS. Even if global stability is not proved, experiments show that the modulated DS remains stable in practice. % The approach in \cite{Kronander15} has the advantage to locally modify the DS dynamics, i.e. the dynamics in regions of the state space far from the demonstrated trajectories remain unchanged. As a limitation, one can consider that it is hard to generalize \cite{Kronander15} to a generic $n$-dimensional space. Indeed, parameterize a rotation matrix in a space with $n>3$ is not trivial and requires at least $n(n-1)/2$ parameters \cite{mortari2001rigid}, while RDS linearly scales with the space dimension. 
LMDS shares some similarities with the proposed RDS. Like RDS, LMDS applies to any autonomous DS (both linear and non-linear), it allows incremental learning from multiple demonstrations, and it permits an accurate reproduction of demonstrated trajectories. These similar features make interesting to experimentally compare the performance of RDS and LMDS (see Sec. \ref{sec:experiments}).

%===================== EXPERIMENTS ===========================================%
\section{Results and comparisons}\label{sec:experiments}
%\subsection{LASA HandWritten Dataset}\label{sec:lasa}
\begin{figure*}[t]
	\centering
	\subfigure[Linear DS + RDS]{\includegraphics[width=0.49\textwidth]{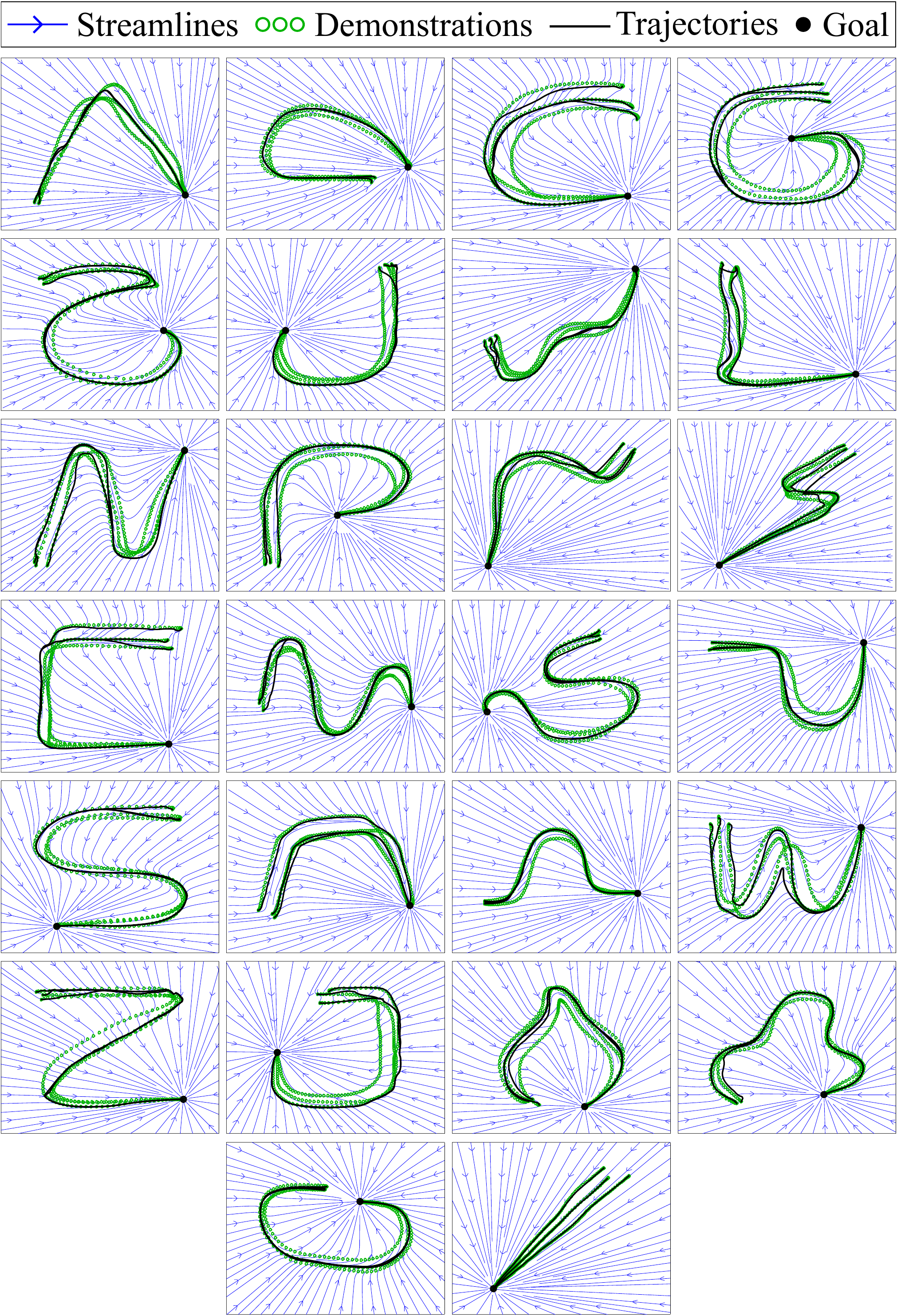}\label{fig:linear_to_lasa}}
	\hspace*{1mm}
	\subfigure[SEDS + RDS]{\includegraphics[width=0.49\textwidth]{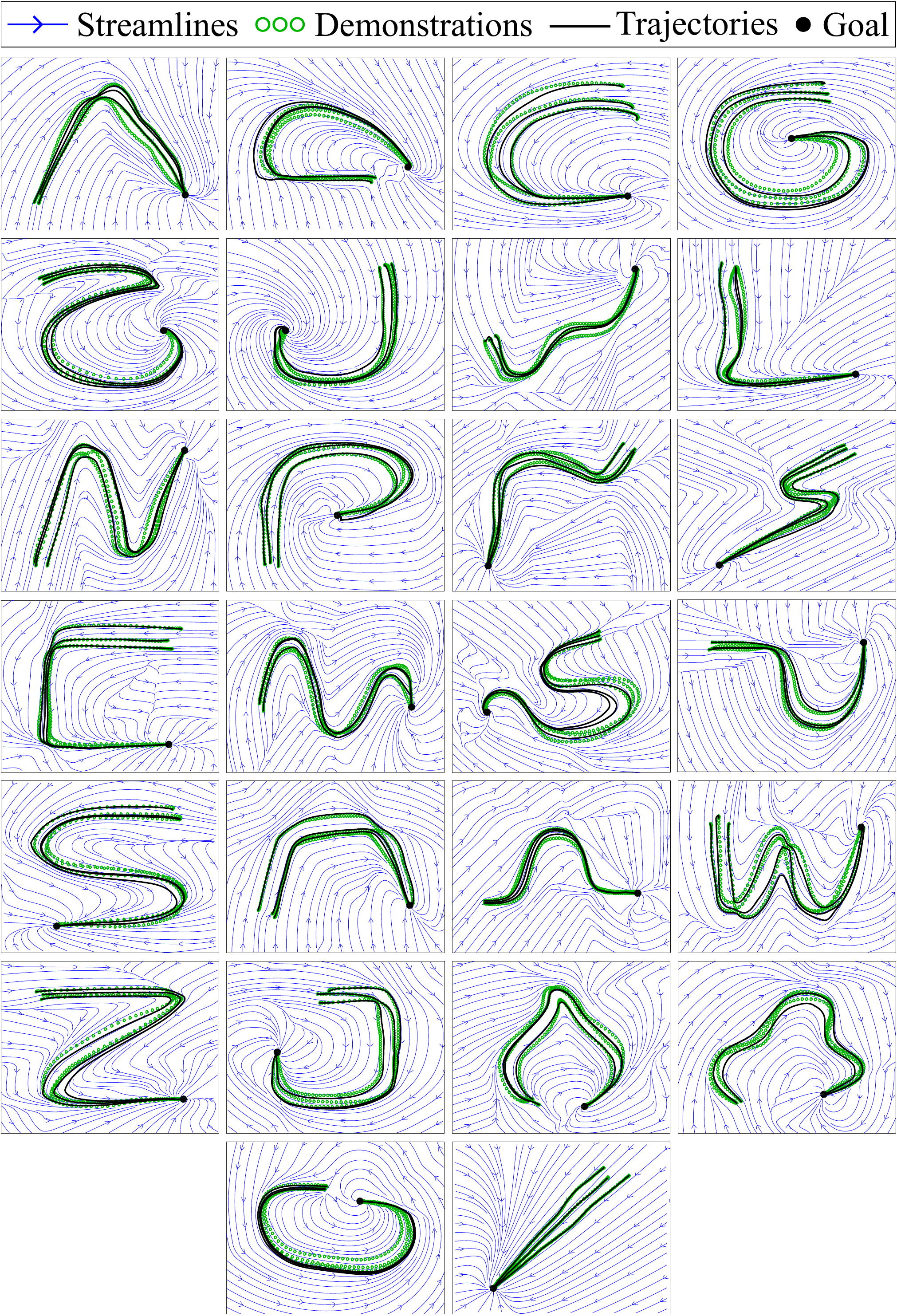}\label{fig:seds_to_lasa}}
    	\caption{The linear DS $\dot{\bfx} = -3 \bfx$ (a) and the SEDS non-linear system (b) reshaped into the $26$ complex motions of the LASA dataset. The proposed RDS is able to learn complex motions without affecting the global stability of the original DS.}
    	\label{fig:lasa_results}
\end{figure*} 
Experiments in this section show the effectiveness of the proposed Reshaped Dynamical Systems (RDS) approach. The LASA Handwriting dataset\footnote{Available on-line: http://bitbucket.org/khansari/lasahandwritingdataset.} is used as a benchmark. The dataset consists of $26$ different point-to-point 2D motions, where each motion trajectory is demonstrated seven times and contains $T_d = 1000$ positions and velocities. All the demonstrations end at the target position $\hat{\bfx} = [0,0]\tr$.

\subsection{Accuracy test}\label{subsec:exp_1}
This experiment compares the reproduction accuracy of the proposed RDS approach and the LMDS approach in \cite{Kronander15}. As a proof of concepts, we consider the first three demonstrations for each motion in the LASA dataset and we subsample each demonstrated trajectory to $100$ samples. LMDS guarantees local stability if the modulation is not active in a neighborhood of the equilibrium. This property is called locality in \cite{Kronander15}. In order to ensure the locality property, we remove the last $10$ points in each demonstration, creating a neighborhood of the origin without training points. We do the same with our approach for a fair comparison. To guarantee the maximum accuracy for both the approaches, all the $90$ samples of each trajectory are considered without applying the sparsity criteria in Sec. \ref{sec:incr_gp}. Moreover, the optimal hyper-parameters are learned from the given demonstrations by maximizing the marginal log-likelihood \cite{Rasmussen06}.

The error that occurs when reproducing a demonstrated motion is measured by the swept error area (SEA) \cite{Clf}, defined as $SEA = \sum_{t=1}^{T_d-1}\mathcal{A}(\bfx_e^{t}, \bfx_e^{t+1}, \bfx_d^{t}, \bfx_d^{t+1})$, where $\bfx_{e}^{t}$ and $\bfx_{d}^{t}$ are respectively the reproduced and the demonstrated position at time $t$, $T_d$ is the length of the demonstration, and $\mathcal{A}(\cdot)$ is the area of the tetragon formed by $\bfx_e^{t}$, $\bfx_e^{t+1}$, $\bfx_d^{t}$, and $\bfx_d^{t+1}$. The reproduced trajectory is equidistantly re-sampled to contain exactly $T_d$ points. The $SEA$ metric measures how well the DS preserves the shape of the demonstrations. To measure how the DS preserves the kinematics of the demonstrations, we use the velocity error defined in \cite{NeuralLearn2} as
$V_{rmse} = \sqrt{\frac{1}{T_d}\sum_{t=1}^{T_d}\Vert \dot{\bfx}_{d}^t - \bff(\bfx_{d}^t)\Vert^2}$, evaluated on the training data $\{\bfx_{d}^{t},\dot{\bfx}_{d}^t\}_{t=1}^{T_d}$. The $V_{rmse}$ measures the difference between the demonstrated velocities and the velocities generated by the learned DS for each training position. %are the demonstrated velocities, $\dot{\bfx}_{r}$ are the reshaped velocities computed at each demonstrated position $\bfx_{d}^{t}$ and $T$ is the number of demonstrated data.  %Alternatively, one can use a time dependent term to guarantee that $\alpha \longrightarrow 0$, $\bfR \longrightarrow \bfI $ for $t \longrightarrow +\infty$. 

Two different scenarios are considered. In the first scenario, the first-order, linear DS $\dot{\bfx} = -3 \bfx$ is used as an original system. This task is particularly challenging, since a linear dynamics has to be transformed into the complex, non-linear motions of the LASA dataset (see the qualitative results in Fig. \ref{fig:linear_to_lasa}). In the second scenario, a stable non-linear DS for each motion is learned by means of the Stable Estimator of Dynamical Systems (SEDS) approach in \cite{SEDS}. RDS and LMDS are applied to the learned DS to improve the reproduction accuracy of the SEDS algorithm.  

The reproduction errors for the considered scenarios are shown in Tab. \ref{tab:comparison_lasa}. Since the reproduction errors ($SEA$ and $V_{rmse}$) of each motion are not normally distributed, we consider the median $M_e$ instead of the mean. To indicate the maximal and minimal deviation from the typical performance, we provide the location of the $10\%$ ($Q_{10}$) and the $90\%$ ($Q_{90}$) quantiles. As shown in Tab. \ref{tab:comparison_lasa}, SEDS+RDS has median errors of $81\,$mm\textsuperscript{2} ($SEA$) and $2\,$mm/s ($V_{rmse}$), which is significantly more accurate than Linear+RDS ($124\,$mm\textsuperscript{2} for $SEA$ and $5.2\,$mm/s for $V_{rmse}$). Also with LMDS, modulating a SEDS system gives more accurate results than modulating a linear DS. This is an expected result, because it is easier to transform a dynamics which is close to the demonstrated motion, rather than transforming a linear DS into a complex motion. In both cases, RDS exhibits higher accuracy than LMDS, meaning that RDS is more effective in imposing a (potentially) different dynamics to a given DS.

\begin{table}[t]
    \centering
    \caption{Reproduction error of RDS and LMDS on the LASA dataset.} 
    \label{tab:comparison_lasa}
    %\resizebox{\columnwidth}{!}{%
    {\renewcommand\arraystretch{1.3} 
	\begin{tabular}{ |c|c|c| }
	\hline
	Learning   & $SEA$ [mm\textsuperscript{2}] & $V_{rmse}$ [mm/s] \\
	Method & ($M_e\,$/$\,Q_{10}\,$/$\,Q_{90}$)  & ($M_e\,$/$\,Q_{10}\,$/$\,Q_{90}$)\\
	\hline
	\hline
	Linear$\,$+$\,$RDS & \textbf{124} / \textbf{25} / \textbf{604} & \textbf{5.2} / 3.4 / \textbf{9.0} \\
	\hline
	Linear$\,$+$\,$LMDS & 233 / 44 / 842 & 6.6 / \textbf{3.2} / 32.4 \\
	\hline
	\hline
	SEDS &  366 / 73 / 584 & 11.5 / 8.2 / 30.9 \\
	\hline
	SEDS$\,$+$\,$RDS &  \textbf{81} / \textbf{27} / \textbf{251} & \textbf{2.0} / \textbf{1.0} / \textbf{3.2} \\
	\hline
	SEDS$\,$+$\,$LMDS &  195 / 66  / 437 & 6.2 / 2.4 / 13.3  \\
    \hline
\end{tabular}
}%}
\end{table}

\subsection{Incremental learning of multi-model behaviors}
The goal of this experiment is two-fold. First, it shows that RDS can learn multi-model motions, i.e. different behaviours in different regions of the space. Second, the experiment shows that RDS and the batch learning approaches in Sec. \ref{sec:batch_ds_learn}  are complementary. As a proof of concepts, we investigate the combination of RDS with SEDSII  \cite{Clf}.

SEDSII computes a stabilizing control input from a learned control Lyapunov function (CLF). The CLF is parameterized as $CLF = \bfx\tr \bfP^{0} \bfx + \sum_{l=1}^{k} \beta^{k}(\bfx)(\bfx\tr \bfP^{k} (\bfx-\bfmu^k))^2$, where $\beta^{k}$, $\bfP^{k}$, and $\bfmu^{k}$ are learned from demonstrations by solving a constrained optimization problem. SEDSII is very effective in accurately learning complex motions while guaranteeing the convergence towards a unique target, but it is not prone to an incremental implementation. Hence, SEDSII is combined with RDS as
\begin{equation}
\dot{\bfx} = \underbrace{\bff_{orig}(\bfx)}_{DS} + \underbrace{\bfu_{CLF}(\bfx)}_{SEDSII} + \underbrace{s\bfu_{RDS}(\bfx)}_{RDS},
\label{eq:seds2_rds}
\end{equation}
where $\bff_{orig}(\bfx)$ is a possibly unstable DS, $\bfu_{CLF}(\bfx)$ is the control input that stabilizes the original DS, and $s\bfu_{RDS}(\bfx)$ is the reshaping controller defined in \eqref{eq:resh_ds_2}--\eqref{eq:clock}.

The controlled DS in (\ref{eq:seds2_rds}) is tested in an incremental learning scenario to show the benefits of combining SEDSII and RDS. We consider the four multi-model motions of the LASA dataset. Each multi-model motion contains demonstrations of two or three different motions (see Fig. \ref{fig:clf_lrds}). Only the first demonstration of each different motion is considered in this experiment. Demonstrations are sub-sampled to $100$ samples. The original DS $\bff_{orig}(\bfx)$ is a Gaussian process, learned from the given demonstration. The parameters used in this experiment are listed in Tab. \ref{tab:multi_par}.

\begin{table}[b]
    \centering
    \caption{Parameters used in the multi-model behaviors learning experiment.} 
    \label{tab:multi_par}
    \resizebox{\columnwidth}{!}{%
    {\renewcommand\arraystretch{1.3} 
	\begin{tabular}{ |ccc|c|ccccc| }
	\cline{1-9}
	\multicolumn{3}{ |c| }{Original GP} & {SEDSII} & \multicolumn{5}{ c| }{RDS}\\ \cline{1-9}
	$\sigma_{k}^{2}$ & $\sigma_{n}^{2}$ & $l$ & k ($\#$ CLF) & $\sigma_{k}^{2}$ & $\sigma_{n}^{2}$ & $l$ & $\hat{c}$ [m/s] & $t_f$ [s]\\
	\hline
	1.0 & 0.4 & 3.0 & 3 & 1.0 & 0.4 & 3.0 & 0.01 & 10.0\\
	\hline
	\end{tabular}
}}
\end{table}	

Three different tests are performed, as shown in Fig. \ref{fig:clf_lrds}. In the first case both $\bff_{orig}(\bfx)$ and $\bfu_{CLF}(\bfx)$ are learned from the first motion (green circles in Fig. \ref{fig:clf_lrds}), while $\bfu_{RDS}(\bfx)=\mathbf{0}$. Novel demonstrations are then provided (brown and red circles) and $\bff_{orig}(\bfx)$ is incrementally updated as proposed in Sec. \ref{sec:incr_gp}. The CLF is not re-trained, since CLF parameters estimation cannot be performed efficiently. As shown in Fig. \ref{fig:clf_lrds}(a) and Tab. \ref{tab:comparison_multi_modal}, novel demonstrations are poorly represented, especially if different motions are demonstrated with the initial CLF parameters. In the second case, instead, the CLF is re-trained considering all the demonstrations. SEDSII accurately learns the multi-model motions, but the training takes almost $200$ times longer. The third case shows the combination of SEDSII and RDS. Instead of re-training the CLF parameters, which is computationally expensive, the reshaping term $\bfu_{RDS}(\bfx)$ is incrementally learned. With the parameters in Tab. \ref{tab:multi_par}, the incremental learning approach in Sec. \ref{sec:incr_gp} uses from $45\%$ to $65\%$ of the points to encode the motion. Results in Fig. \ref{fig:clf_lrds}(c) and Tab. \ref{tab:comparison_multi_modal} clearly show that the combination of SEDSII and RDS is a good compromise in terms of accuracy and training time.

\begin{figure}[t]
	\centering
	{%
	\setlength{\fboxsep}{0pt}%
    \subfigure[SEDSII with initial CLF.]{\centering {\includegraphics[width=\columnwidth]{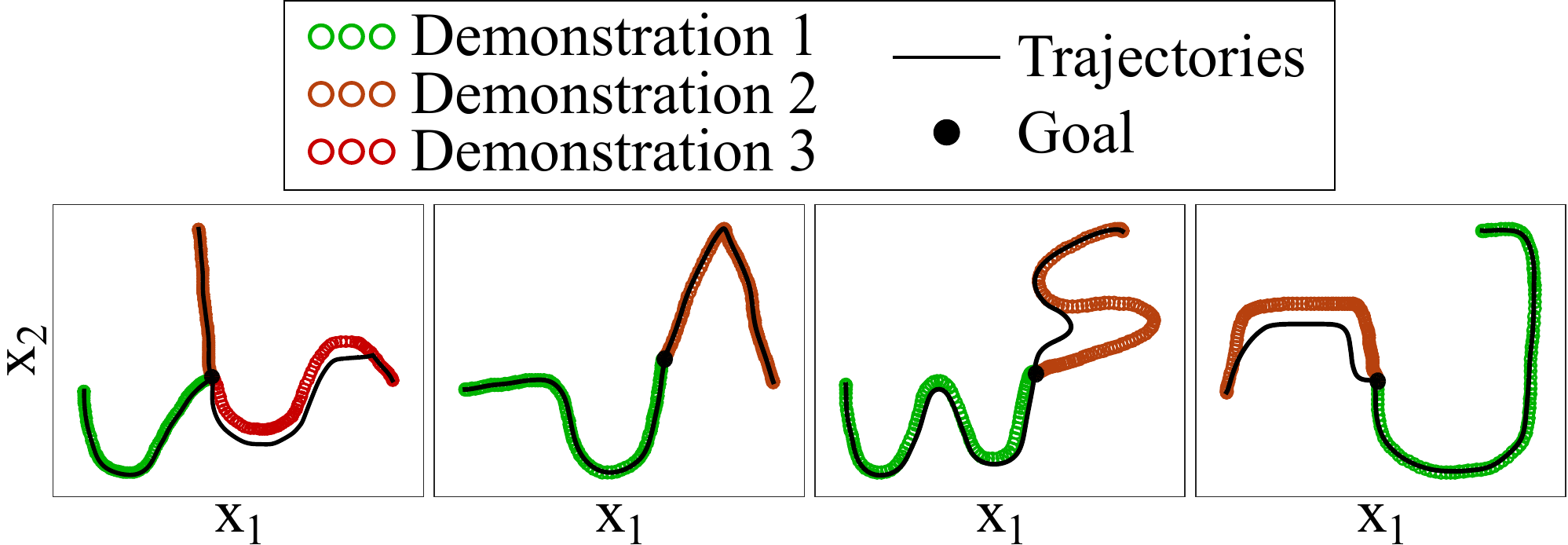}}}
    \subfigure[SEDSII with re-trained CLF.]{\centering {\includegraphics[width=\columnwidth]{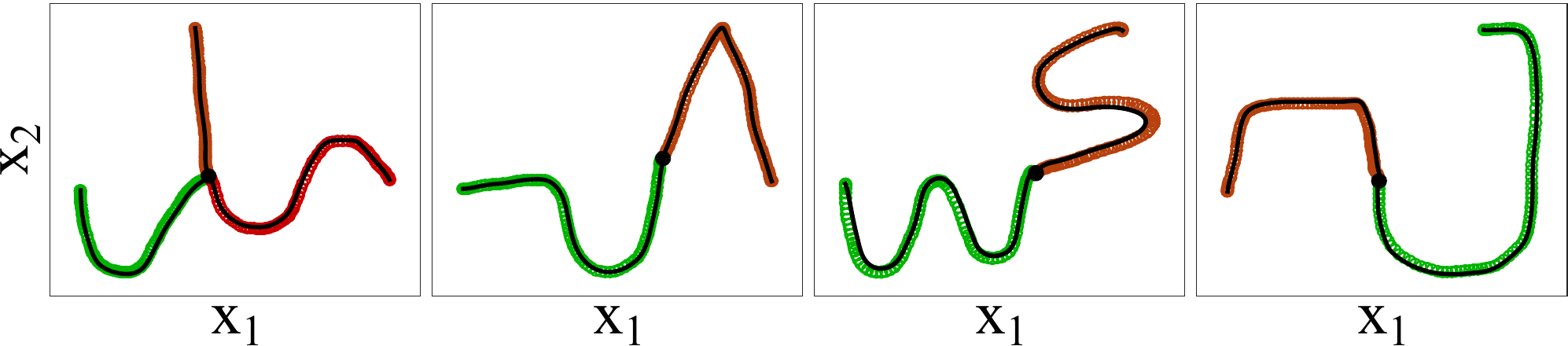}}}
    \subfigure[SEDSII (initial CLF) combined with our reshaping approach.]{\centering {\includegraphics[width=\columnwidth]{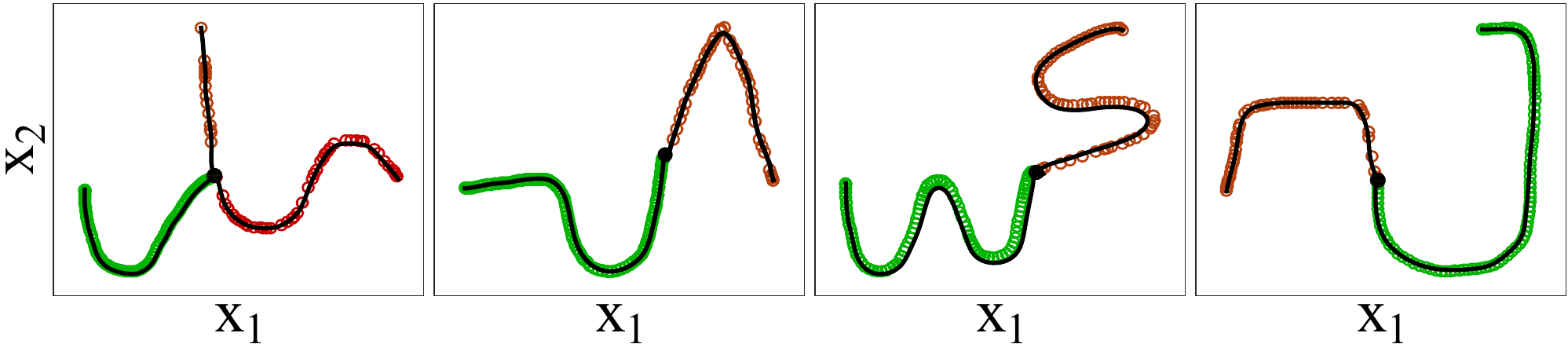}}}
    }%
    	\caption{Qualitative results for the incremental learning of stable multi-model motions with different approaches.}
    	\label{fig:clf_lrds}
\end{figure} 

\begin{table}[t]
    \centering
    \caption{Reproduction errors and Training Times of RDS and SEDSII on the multi-model LASA motions.} 
    \label{tab:comparison_multi_modal}
    \resizebox{\columnwidth}{!}{%
    {\renewcommand\arraystretch{1.3} 
	\begin{tabular}{ |c|c|c|c| }
	\hline
	Learning  & Re-train  & SEA [mm\textsuperscript{2}]  & Time [s]\\
	Method & CLF & ($M_e\,$/$\,Q_{10}\,$/$\,Q_{90}$)  & (mean $\pm$ std) \\
	\hline
	\hline
	GPR$\,$+$\,$SEDSII & No & 1.69 / 1.05 / 5.27 & \textbf{0.009} $\pm$ \textbf{0.007}\\
	\hline
	\hline
	GPR$\,$+$\,$SEDSII & Yes & \textbf{1.54} / 1.08 / \textbf{2.5} & 3.6 $\pm$ 2.6\\
	\hline
	\hline
	GPR$\,$+$\,$SEDSII + RDS & No & 1.56 / \textbf{1.05} / 2.71 & 0.018 $\pm$ 0.013\\
    \hline
\end{tabular}
}}
\end{table}	

%\begin{table}[h]
%    \centering
%    \caption{Reproduction errors and Training Times of RDS and SEDSII on the multi-modal LASA motions.} 
%    \label{tab:comparison_multi_modal}
%    \resizebox{\columnwidth}{!}{%
%    {\renewcommand\arraystretch{1.3} 
%	\begin{tabular}{ |c|c|c|c| }
%	\hline
%	Learning  & Re-trained  & Reproduction Error  & Training Time \\
%	Method & CLF & (mean $\pm$ std) [mm/s] & (mean $\pm$ std) [s]\\
%	\hline
%	\hline
%	GPR + SEDSII & No & 2.43 $\pm$ 1.75 & \textbf{0.002} $\pm$ \textbf{0.0007}\\
%	\hline
%	\hline
%	GPR + SEDSII & Yes &  \textbf{1.64} $\pm$ \textbf{0.57}  & 3.6 $\pm$ 2.6\\
%	\hline
%	\hline
%	GPR + SEDSII + RDS & No &  1.7 $\pm$ 0.65  & 0.018 $\pm$ 0.013\\
%    \hline
%\end{tabular}
%}}
%\end{table}	

\subsection{Incremental learning in higher dimensions}\label{subsec:exp_3}
RDS is directly applicable to spaces of any dimension. On the contrary, LMDS exploits a rotation matrix, and defining a rotation in spaces with more than $3$ dimensions is still an open problem (see Sec. \ref{sec:discussion}). Extending LMDS to high dimensional spaces is beyond the scope of this paper. Hence, in this experiment, we show the scalability of RDS to high dimensional spaces. To this end, we exploit the DS  $\dot{\bfx} = 3(\hat{\bfx}- \bfx)$ to generate a converging trajectory in a $6$ dimensional space. The $6$D state vector $\bfx = [\theta_1,\ldots,\theta_6]\tr \in \mathbb{R}^6$ can be interpreted as the joint angles of a robotic manipulator. The original DS generates a point-to-point motion in the joint space from $\bfx(0) = [35, 55, 15, -65, -15, 50]\tr\,$deg to $\hat{\bfx} = [-60, 30, 30, -70, 25, 85]\tr\,$deg. The original joint angles trajectories are shown in Fig. \ref{fig:ds_7D} (black solid lines).

The original trajectory is modified by providing $100$ additional data in the time interval $[0.25, 1.25]\,$s (brown solid line in Fig. \ref{fig:ds_7D}). For each joint angle $\theta_i$, the training data belongs to a straight line starting from $\theta_i(0.25)\,$deg and ending at $\theta_i(0.25) + 20\,$deg. As shown in Fig. \ref{fig:ds_7D}, the reshaped trajectories (blue solid lines) accurately follow the given demonstration and converge to the desired goal $\hat{\bfx}$. Results are obtained with noise variance $\sigma_{n}^{2} = 0.04$, signal variance $\sigma_{k}^{2} = 0.1$, length scale $l=0.01$, and the threshold $\bar{c} = 1\,$rad/s. With the adopted $\bar{c}$ only $26$ points over $100$ are used to learn the control input.

\begin{figure}[!t]
\centering  
\includegraphics[width=\columnwidth]{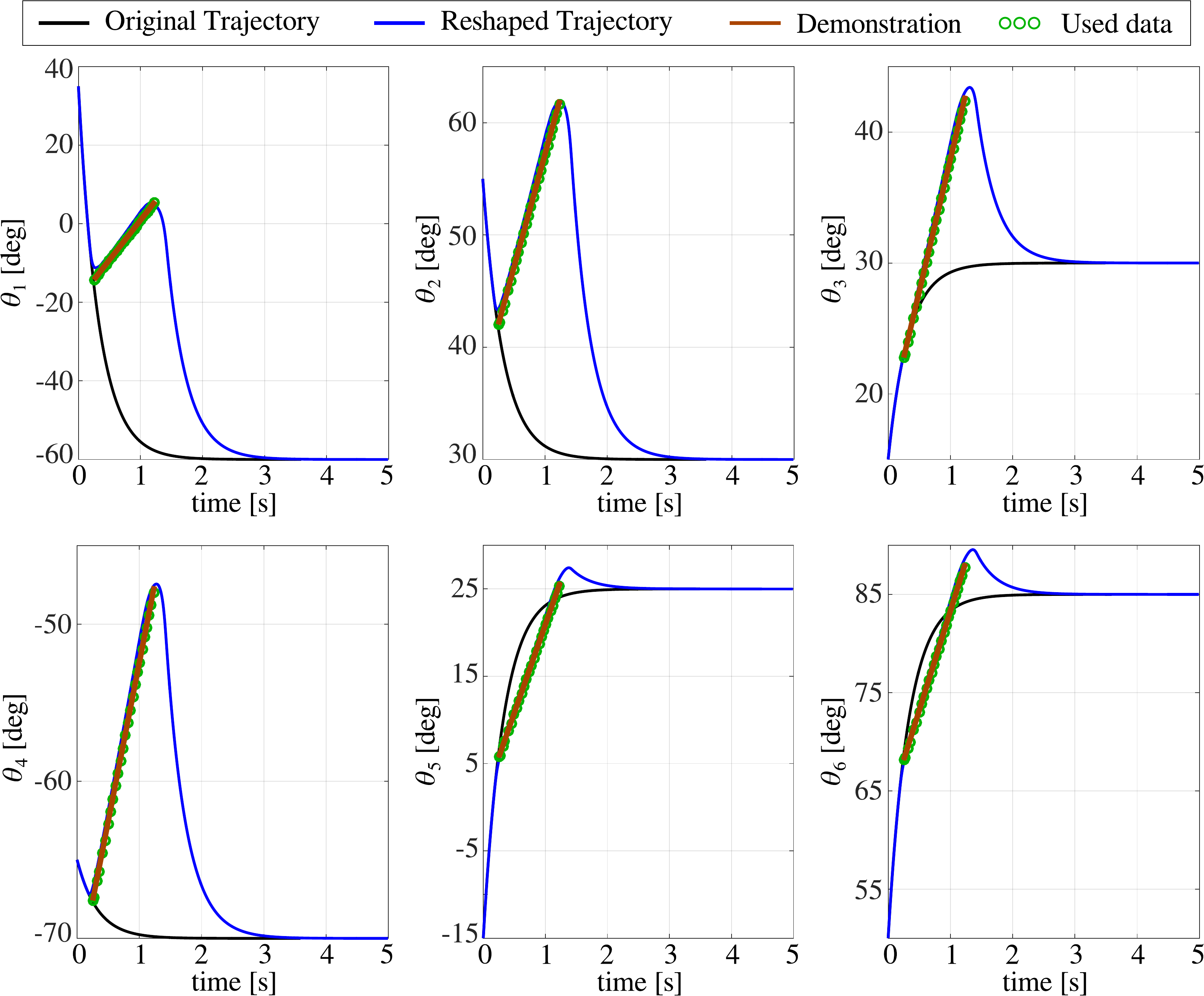}
\caption{Results obtained when RDS reshapes a motion in a $6$D space.}
\label{fig:ds_7D}
\end{figure}

\subsection{Discussion}\label{sec:discussion}
Performed experiments have underlined several properties of the proposed RDS approach. As shown in Sec. \ref{sec:experiments}-B, RDS can be easily combined with batch learning based approaches like SEDSII. The result of this combination is a system capable to incrementally refine a learned skill by significantly reducing training time while preserving the stability of the motion and the reproduction accuracy. Results in Sec. \ref{subsec:exp_1} show that reshaping a non-linear system results in more accurate reproduction than reshaping a linear DS. This is because it is hard to transform linear dynamics (straight lines) into a complex, intrinsically non-linear motion like the ones contained in the LASA dataset (see Fig. \ref{fig:lasa_results}). It is worth noticing that DMPs also reshape a linear dynamical system and they may suffer from a similar accuracy problem. 

RDS, as DMP, exploits a clock signal to suppress the control input after $t_f$ seconds and to guarantee global stability. The value of $t_f$ affects the obtained results. Small values of $t_f$ may result in the loss of accuracy, if the control input is suppressed too early. On the contrary, too large values of $t_f$ may cause the system to stop in a local equilibrium for long time before the control is deactivated.  These problems were not encountered in our experiments. The reason is that we selected large values of $t_f$ (larger than the demonstration time) and, since we used local demonstrations (i.e. no training data were added in a neighborhood of the equilibrium) and a local regression technique (GP), the control input was already vanishing ($\bfu \rightarrow \zero$) for $t < t_f$. In other words, the reshaped DS was reaching the goal before $t_f$ seconds. 

In order to illustrate the behavior of RDS, we design a ``failure'' case where the reshaped system falls into a spurious attractor. Consider that RDS generates a spurious equilibrium if and only if, for $s=1$, $\bfu(\bfx) = -\bff(\bfx)$ for some $\bfx \neq \hat{\bfx}$, i.e. if the original dynamics and the learned control are anti-parallel and have the same magnitude. To reproduce this situation, we reshape the 2D DS $\dot{\bfx} = -3\bfx$, used to generate a converging motion from $\bfx(0) = [2,\,2]\tr\,$m to $\hat{\bfx} = [0,\,0]\tr\,$m. As shown in Fig. \ref{fig:spurious_test}, a demonstration is provided in the form of a straight line starting from the goal ($\hat{\bfx} = [0,\,0]\tr\,$m) and ending at $\bfx = [0.6,\,0.6]\tr\,$m, therefore pushing away the original DS from its global equilibrium. In this case, RDS generates a spurious attractor at about $\bar{\bfx} = [0.6,\,0.6]\tr\,$m (Fig. \ref{fig:spurious_test} (left)) because $\bfu(\bar{\bfx}) = -\bff(\bar{\bfx})$. Satisfying the condition $\bfu(\bfx) = -\bff(\bfx)$ for $\bfx \neq \hat{\bfx}$ is improbable in realistic cases, as experimentally shown in this section. For instance, in Sec. \ref{subsec:exp_3} we also reshape $\dot{\bfx} = -3\bfx$ (in a 6D space) without generating spurious equilibria (Fig. \ref{fig:ds_7D}). However, even if the motion temporary stops in a spurious equilibrium, the control input starts to vanish ($s \rightarrow 0$) for $t > t_f$ and the motion converges to the global attractor (Fig. \ref{fig:spurious_test} (right)). 
Depending on the application, waiting $t_f$ seconds in a spurious attractor may be undesirable. Spurious attractors in first-order DS can be detected by checking if the DS velocity vanishes for any $\bfx \neq \hat{\bfx}$ and escaped by adding a small velocity in a fixed direction \cite{DS_avoidance}, e.g. a velocity pointing towards the goal. Moreover, the influence of $t_f$ on the generated motion can be reduced by implementing a time scaling approach similar to the one exploited in DMPs \cite{DMP}. 

\begin{figure}[!b]
\centering  
\includegraphics[width=\columnwidth]{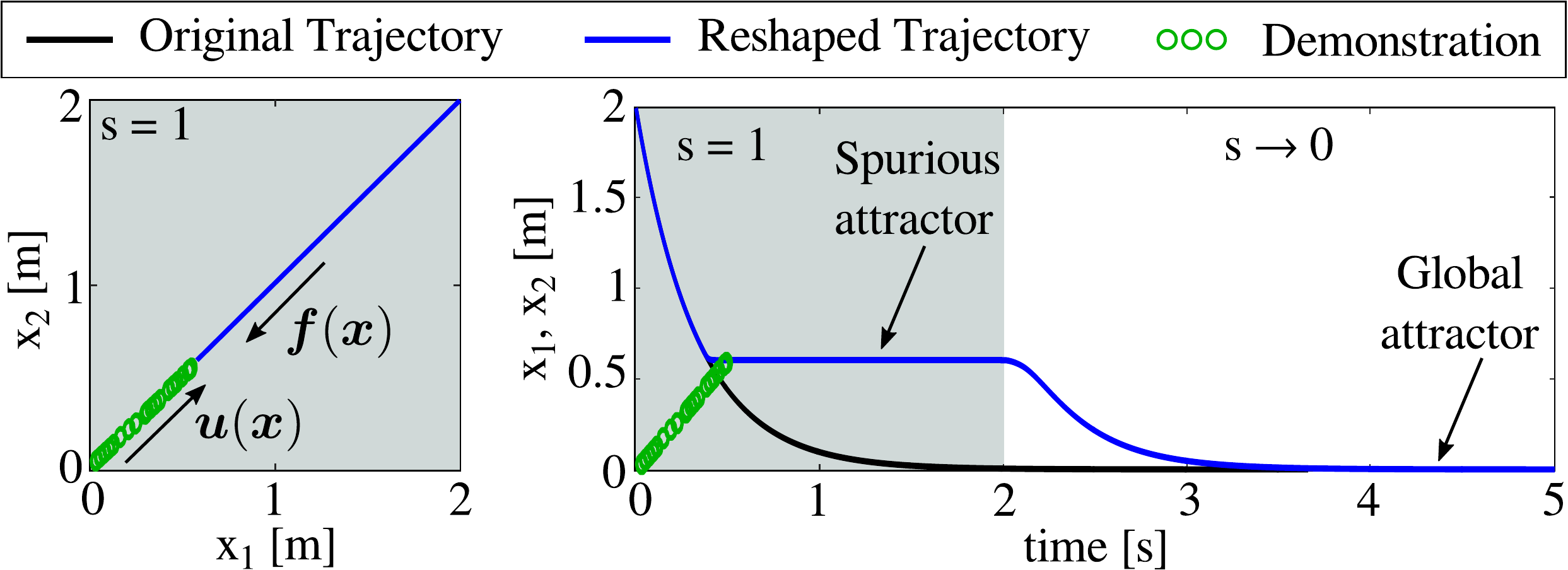}
\caption{Results obtained when the novel demonstration forces RDS to generate a spurious attractor. (Left) RDS generates a spurious attractor at $\bar{\bfx} = [0.6,\,0.6]\tr$ because $\bfu(\bar{\bfx}) = -\bff(\bar{\bfx})$. (Right) The reshaped trajectory stops into the spurious equilibrium until the control input is deactivated ($t > t_f = 2\,$s) and then converges to the global equilibrium.}
\label{fig:spurious_test}
\end{figure}

RDS has been compared with LMDS, a prominent approach in the field. The comparison has shown that trajectories generated by RDS follow the demonstrations in a more accurate manner. The higher accuracy of RDS mainly depends on the fact that an additive control input is probably more effective in imposing a different dynamics to the original system. By inspecting  (\ref{eq:train_ds}), in fact, it is clear that the learned control $\bfu(\bfx)$ cancels out the original system dynamics $\bff(\bfx)$ to impose the demonstrated dynamics $\dot{\bfx}_d$. As shown in Sec. \ref{sec:experiments}-C, RDS has the advantage to be directly applicable to high dimensional spaces, while LMDS requires the computation of a suitable rotation matrix. In spaces with more than three dimensions multiple parameterizations of a rotation are possible and all require at least $n(n-1)/2$ parameters \cite{mortari2001rigid}, while the control input in RDS is a uniquely defined $n$-dimensional vector. As a final remark, recall that LDMS  can encode periodic movements, while generating stable periodic orbits with RDS is still an open problem.

%===================== ROBOT EXPERIMENTS ===========================================%
%\input{Sections/Robot_Experiments.tex}

%===================== CONCLUSION ===========================================%
\section{Conclusions and Future Work}\label{sec:conclusion}
We presented the Reshaped Dynamical Systems (RDS), an approach useful to incrementally update a predefined skill by providing novel demonstrations. RDS is able to modify the trajectory of a dynamical system to follow demonstrated trajectories, while preserving eventual stability properties. To this end, a suitable  control input is learned from demonstrations and retrieved on-line using Gaussian process regression. The procedure is incremental, meaning that the user can add novel demonstrations until the reproduced skill is satisfactory. Experimental results show the effectiveness of the proposed approach in reshaping dynamical systems. Compared to the state-of-the-art approaches, our method has a higher reproduction accuracy and it is directly applicable to high dimensional spaces.

RDS exploits Gaussian process regression to learn and retrieve the additive control input. Gaussian processes use all the training data to regress the output, which is a drawback in incremental learning scenarios where novel demonstrations are continuously provided. The problem is alleviated in this work by using a selection algorithm that limits the number of training data. However, the applicability of other incremental learning techniques to DS reshaping has not been investigated and it will be the topic of our future research.

%The reshaping approach is effective in learning point-to-point motions. Nevertheless, DS converging towards periodic orbits (limit cycles) have been used in robotic applications to generate periodic behaviours \cite{DMP}. Compared to static equilibria, limit cycles stability has a different characterizations in terms of Lyapunov analysis. Our future research will focus on considering incremental reshaping of periodic motions while preserving their stability properties.

%===================== APPENDIX ===========================================%
%\input{Sections/AppendixLyapunov.tex}

%===================== ACKNOWLEDGEMENTS ===========================================%
%\section*{Acknowledgements}

%===================== bibliography ===========================================%
\bibliographystyle{IEEEtran}
\bibliography{bibliography}

\end{document}